\documentclass[11pt]{article}

\usepackage[T1]{fontenc}
\usepackage[utf8]{inputenc}
\usepackage[margin=1in]{geometry}

\usepackage{amsmath}
\usepackage{amssymb}
\usepackage{amsthm}
\usepackage{mathtools}

\usepackage{natbib}
\usepackage{hyperref}
\hypersetup{
    colorlinks=true,
    linkcolor=blue,
    citecolor=blue,
    urlcolor=blue
}

\newcommand{\Eta}{\mathcal{H}}
\newtheorem{theorem}{Theorem}[section]
\newtheorem{corollary}{Corollary}[theorem]
\newtheorem{lemma}{Lemma}[section]

\title{Hierarchical Solomonoff Induction: An Unbounded Machine Learning Model}
\author{
    Nathan Young\\
    Strong AI Lab, University of Auckland\\
    \texttt{nathan.young@auckland.ac.nz}
}
\date{}

\begin{document}
\maketitle

\begin{abstract}
Solomonoff Induction, or SolInd, provides an ideal unbounded model of a priori sequence prediction but cannot naturally describe extrapolation from a given training dataset, as performed by Large Language Models.
We apply de Finetti's theorem on exchangeable distributions to SolInd to produce what we call Hierarchical Solomonoff Induction, or HSI, which maintains a hyperprior over all Solomonoff priors that can be conditioned on previously observed sequences.

We extend Wood et al.'s proof that universal mixtures of semimeasures are equivalent to SolInd to show that universal mixtures of these mixtures are also equivalent, proving that HSI=SolInd.
We also prove that HSI's excess error on any distribution, compared to its true generator, is bounded by that generator's complexity in the hyperprior. This result is directly comparable to SolInd's prediction error being bounded by the Kolmogorov complexity of the sequence being predicted, and forces HSI's average excess error to converge to 0 as a dataset grows, leading to optimal prediction in the limit.
We claim that HSI is an ideal unbounded model of sequence prediction given a dataset in the same way that SolInd is ideal over individual sequences.
\end{abstract}

\section{Introduction}
To understand a solution to a computational problem, it can be helpful to first understand that problem's unbounded solution, i.e. how it might be solved with unlimited computing power.
Modern Large Language Models (LLMs) operate in the domain of \textit{sequence prediction}: given an ordered sequence of symbols on some alphabet (e.g. letters, Unicode characters, or tokens), what symbol comes next? Or, in a probabilistic setting, what distribution over symbols should be expected to come next?

The optimal solution to both of these problems is \textit{Solomonoff Induction}, or \textit{SolInd}, first described by~\citet{Solomonoff1964}.
SolInd is Bayesian induction over inputs to a universal Turing machine $U$, with each input $s$ assigned a prior probability of $2^{-\ell(s)}$ (i.e. shorter inputs get more weight).

SolInd has been shown~\citep{Hutter2001a} to have several useful optimality properties, including universal dominance (no computable predictor can beat SolInd on any sequence by more than a constant factor) and bounded error (SolInd's prediction error on any sequence is bounded by that sequence's Kolmogorov complexity on $U$).

There has been some recent research on modelling LLMs as approximations of Solomonoff Induction~\citep{Young2025, Wan2024}.
However, this research misses a significant part of how LLMs, and machine learning (ML) systems in general, work.
Solomonoff inductors make optimal predictions over individual sequences, but not over \textit{datasets}; it is possible to concatenate a set of strings and use this as a prefix for the sequence that a Solomonoff inductor must predict, but this has the risk of introducing new information not present in the sequences (for example, if the strings are sorted) and identifying programs more complicated than the dataset's generator.
In short, SolInd cannot be \textit{trained}, and serves as an ideal model of sequence prediction at test time only; it cannot describe ideal performance at training time.
Even then, the distribution over sequences represented by the UTM $U$ does not necessarily correspond to the distribution we wish to predict, and it cannot be updated from one sequence to another.

With the aim of introducing an unbounded sequence prediction model that can operate over arbitrary datasets of sequences as well as individual, we propose an alternative formulation called \textbf{Hierarchical Solomonoff Induction}, or \textbf{HSI}.
HSI is a generalisation of SolInd to computable distributions and resembles a hierarchical Bayesian model, maintaining not only a prior over sequences but a \textit{hyperprior} $\Eta$ over all computable distributions of sequences (equivalently, over all Turing machines or all Solomonoff priors).
HSI is identical to SolInd on individual sequences, but can also condition its hyperprior on a given dataset, in a way analogous to ML training.
HSI exhibits optimality properties over datasets that are comparable to SolInd's over sequences, making a finite amount of error on any dataset bounded by that dataset's generator's complexity in $\Eta$ and dominating any other predictor of datasets.

\section{Setup}

\subsection{Preliminaries and Notation}
Unless otherwise stated, we follow the definitions and notation of \citet{Wood2013}.
We define the binary alphabet $\mathbb{B}=\{0,1\}$, the set of binary strings $\mathbb{B}^*$ and infinite binary sequences $\mathbb{B}^\infty$. We denote the empty string as $\epsilon$, the length of a string $x$ as $\ell(x)$, the concatenation of strings $a$ and $b$ as $ab$, and prefixes as $\sqsubseteq$ (that is, $p\sqsubseteq px$ for any $x$).
We then have the cylinder set of $x$, $\Gamma_x=\{\omega\in\mathbb{B}^\infty:x\sqsubseteq\omega\}$, and the uniform measure $\lambda$ with $\lambda(\Gamma_x)=2^{-\ell(x)}$.

Rather than full probability measures over sequences with $\mu(\epsilon)=1$ and $\mu(x)=\mu(x0)+\mu(x1)$, we consider \textit{semimeasures}, which permit $\mu(\epsilon)\leq 1$ and $\mu(x)\geq\mu(x0)+\mu(x1)$.
A semimeasure is \textit{lower-semicomputable} (equivalently, \textit{enumerable}) if it can be approximated from below by a computable, monotonically increasing series of values. We write $\mathcal{M}$ for the set of all enumerable semimeasures.

All Turing machines (TMs) discussed in this paper are of a class called \textit{monotone Turing machines}. A monotone TM $M$ has a unidirectional read-only input tape on which is written some sequence $s$, a unidirectional write-only output tape on which it writes some sequence $x$, a bidirectional working tape, and a finite state machine defining its behaviour over these tapes.
$M$ cannot proceed if it attempts to read from a blank cell on its input tape; it must wait for input to arrive. Therefore, if $M(s)$ is finite, $M$ may produce more output if given more input.
If $M$ outputs $x$ when given $s$ as input, we write that $M(s)=x$. For any $x'\sqsubseteq x$ we say that $s$ \textit{codes for} $x'$ on $M$ and write $M(s)\sqsupseteq x'$.

We then define, over a TM $M$, the Solomonoff semimeasure $P_M$:
\[
P_M(x) \coloneq \lambda(\{\omega:M(\omega)\sqsupseteq x\})
\]
This is equivalent to the measure of all minimal $s$ that code for $x$ on $M$:
\[
P_M(x) = \sum_{\lfloor s:M(s)\sqsupseteq x\rfloor} 2^{-\ell(s)}
\]
where $\lfloor X \rfloor$ is the maximal prefix-free subset of $X$.
By Lemma 6 of \citet{Wood2013}, Solomonoff priors on monotone TMs correspond directly to the class of lower-semicomputable semimeasures.\footnote{This only holds for nonempty strings, as TMs begin with an empty output tape and so always assign $P_M(\epsilon)=1$. In this paper, we ignore the case of the empty string when comparing semimeasures.}

\subsection{Universal TMs}
For a Solomonoff semimeasure to be a universal inductor, the chosen TM must be a universal Turing machine, or UTM.
Existing literature~\citep{Wood2013, LiVitanyi2019} describes multiple classes of TMs that might be considered `universal'; here we disambiguate and examine three such classes.

\begin{itemize}
    \item \textbf{Universality by surjection:} A TM $U$ is \textit{surjective} if, for any output sequence $x$, there exists some input sequence $s$ such that $U(s)=x$ (equivalently, $P_U(x)>0$ for all $x\in\mathbb{B}^*$).
    \item \textbf{Optimality:} A TM $U$ is \textit{optimal} if, for any TM $V$, there exists some positive constant $c$ such that $P_U(x)\geq cP_V(x)$ --- i.e. $P_U$ dominates all Solomonoff semimeasures. This is the standard notion of an (additively) optimal machine in algorithmic information theory~\citep{LiVitanyi2019, DowneyHirschfeldt2010}.
    \item \textbf{Universality by adjunction:} A TM $U$ is \textit{universal by adjunction} if there exists an enumeration $\{V_i:i\in\mathbb{N}\}$ of all TMs and a prefix-free encoding $p:\mathbb{N}\to\mathbb{B}^*$ such that $U(p(i)s)=V_i(s)$~\citep{FigueiraStephanWu2006, DowneyHirschfeldt2010} --- in other words, $U$ can emulate any TM.\footnote{We might also define a class of machines with an encoding that is not necessarily prefix-free; this class, if distinct from both optimality and universality by adjunction, would be a subset of the former and a superset of the latter.}
    In line with \citet{Wood2013}, we use the term `universal Turing machine', or `UTM', to refer to this class.

    Note that for every $V_j$ that is universal by adjunction, there is some additional prefix $p'$ such that $U(p(j)p'(i)s)=V_j(p'(i)s)=V_i(s)$; there are therefore infinitely many $p(j)p'(i)$ such that $U(ps)=V_i(s)$.
\end{itemize}
\citeauthor{Wood2013} require UTMs to give no output if their input does not begin with some $p(i)$; however, Corollary \ref{cor:utm-codes} of Theorem \ref{thm:enums-of-ums=ums} shows that this is unnecessary.

These classes form a strict hierarchy: UTMs are a subset of optimal TMs, which are a subset of surjective TMs.
We now demonstrate this fact, as it is relevant to our later discussion of the domain of HSI's hyperprior, as well as HSI's equality to SolInd and induction over TMs in general.

\begin{lemma}
\label{lm:optimal-neq-utm}
There exists an optimal TM that is not a UTM.
\end{lemma}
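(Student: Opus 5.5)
The plan is to start from a UTM $U$ and build a machine $U'$ that still dominates $P_U$, so it is optimal, but whose input/output behaviour rules out universality by adjunction. The natural obstruction is prefix-freeness of the encoding $p$. If $U'$ were a UTM, then every TM $V_i$ would be emulated behind some codeword, $U'(p(i)s)=V_i(s)$. In particular, this holds for the machine $V_0$ that copies its input verbatim, $V_0(s)=s$, which gives $U'(p(0)s)=s$ for every $s$. So I want a $U'$ that never behaves like the identity on all extensions of any string. The concrete choice is
\[
U'(s)\coloneq 0\,U(s),
\]
that is, $U'$ first writes a $0$ to the output tape and then simulates $U$ on its input. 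This is clearly a monotone TM.

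Optimality comes first. For any $x$, the minimal programs coding for $0x$ on $U'$ are exactly the minimal programs coding for $x$ on $U$. Hence $P_{U'}(0x)=P_U(x)$, while $P_{U'}(1x)=0$. That last fact spoils domination over machines that output strings beginning with $1$, so the construction needs adjusting: $U'$ should reserve the first input bit. On input $0s$ it outputs $U(s)$ unchanged. On input $1s$ it outputs a fixed garbled version of $U(s)$, for instance with every output bit doubled. The first branch gives $P_{U'}(x)\geq \tfrac12 P_U(x)\geq \tfrac12 c_V P_V(x)$ for every $V$, so $U'$ is optimal.

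Non-universality is the main obstacle, because the $0$-branch is itself a full copy of $U$, so $U'$ looks like a genuine UTM. It may therefore be easier to pick a different invariant that the copy of $U$ cannot break. One candidate is to make $U'$ output nothing at all until it has read, say, a full program of $U$. However, any delay can be matched by an emulated $V_i$ that also waits, so delays do not help. A better invariant is to exploit that adjunction requires \emph{exact} equality $U'(p(i)s)=V_i(s)$ for \emph{every} $s$, including infinite inputs, whereas optimality only constrains measure up to a constant.

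So I would define $U'$ to behave like $U$ on $0s$, and, on every input, to stop producing output whenever it reads a sufficiently long run of $1$s in a way no fixed prefix can disable. Concretely: $U'$ scans its input, and as soon as the input read so far contains the block $1^{k}$ with $k$ greater than the length of what it has read before that block, it halts output forever. For a fixed prefix $p(i)$, choose $s=1^{m}$ with $m>\ell(p(i))$. Then $U'(p(i)s)$ is finite, whereas the identity machine $V_0$ gives an infinite output on the infinite input $s=1^\infty$ (or a long enough output on a long enough finite $s$). This contradicts $U'(p(0)s)=V_0(s)$.

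It remains to check that this truncation costs only a constant factor in measure, so that optimality survives. This follows because the set of inputs $0s$ that are truncated early has $\lambda$-measure at most $\sum_n 2^{-(n+1)}$, the probability that a run longer than the prefix read so far appears. That is at most about $1/2$, uniformly, which would have to be verified by a careful bound showing each $x$ keeps a constant fraction of its $U$-programs.
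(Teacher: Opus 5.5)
The gap is in the optimality step. You defer it, and the justification you sketch does not work. A bound on the total $\lambda$-measure of truncated inputs gives no lower bound on $P_{U'}(x)/P_U(x)$ for an individual $x$. The programs coding for $x$ have total measure $P_U(x)$, which may be tiny, and nothing stops them from lying inside the truncated set.

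The construction can in fact fail outright. Suppose the codewords of $U$ all begin with $11$ (say $p(i)=110^i1$) and $U$ is silent on other inputs. Then $U$ writes nothing before it has read a full codeword. But every input $0p(i)s$ contains the block $1^2$ after one bit, so it is cut off first, and inputs beginning with $1$ are cut off at once. So $U'$ is silent everywhere, and ``each $x$ keeps a constant fraction of its $U$-programs'' is simply false for such $U$.

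Your non-universality step is fine in outline. If $U'(qs)=s$ for all $s$, then on $q1^m$ with $m>\ell(q)$ the machine must read the whole input before writing the last output bit, because $q1^{m-1}0$ needs a different bit there. By that point the truncation has already fired.

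What is missing is a choice of $U$ together with a per-program argument. One way:
\begin{itemize}
\item Let $W$ read its input in pairs and feed only the first bit of each pair to a fixed UTM $\tilde U$, so $P_W=P_{\tilde U}$.
\item Choose $U$ with a code $q$ for $W$ such that $0q$ contains no triggering block, ends in $0$, and has length $L\geq 5$ (e.g.\ codewords $0^i10$ with $W$ at an index $i\geq 2$).
\item Then no block starting inside $0q$ can fire. For each fixed $\tilde U$-program, a block starting at position $j\geq L$ must cover at least $\lfloor (j+1)/2\rfloor$ independent ignored bits.
\item So the truncation fires with probability at most $\sum_{j\geq L}2^{-\lfloor (j+1)/2\rfloor}\leq\frac12$, which yields $P_{U'}(x)\geq 2^{-\ell(q)-2}P_{\tilde U}(x)$.
\end{itemize}

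It is much simpler to use the obstruction the paper mentions after its proof, namely the silent machine. Take a machine that reads a bit $b$, outputs $b$ at once, and then copies the rest of $U$'s output only if $U$'s first output bit is $b$. It satisfies $P(y)\geq\frac12 P_U(y)$ for nonempty $y$, so it is optimal. Yet it cannot emulate the silent machine behind any nonempty codeword.

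The paper's own proof works at the level of priors rather than machines. It takes Wood et al.'s dominant semimeasure $\delta'$ that is not a universal mixture and realises it by a monotone machine (their Lemma 6). It then uses the fact that every UTM's prior is a universal mixture (their Lemma 10).
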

\begin{proof}
By Theorem 16 in \citet{Wood2013}, there exists a universally dominant enumerable semimeasure $\delta'$ that is not a universal mixture.
By their Lemma 6, there exists a TM $U_{\delta'}$ with a Solomonoff semimeasure equal to $\delta'$.
By their Lemma 10, all UTMs have Solomonoff semimeasures equivalent to a universal mixture.
$U_{\delta'}$ has a universally dominant Solomonoff semimeasure that is not a universal mixture, and is therefore optimal but not a UTM.
\end{proof}
As an intuition, $\delta'$ assigns $\delta'(0)=\delta'(1)=0.5$, equivalent to the Solomonoff semimeasure over a TM that begins by either outputting a 0 or a 1; such a machine cannot emulate a `silent' TM that never prints any output at all.

\begin{lemma}
\label{lm:surjective-neq-optimal}
    There exists a surjective TM that is not optimal.
\end{lemma}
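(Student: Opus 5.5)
The plan is to exhibit an explicit, very simple surjective machine and show that it cannot dominate some other machine. The natural candidate is the identity machine $I$, which copies its input tape to its output tape symbol by symbol. It is clearly monotone, and $I(x)=x$ for every $x\in\mathbb{B}^*$, so it is surjective. Its Solomonoff semimeasure is easy to compute. The minimal inputs coding for $x$ form the single set $\{x\}$, so $P_I(x)=\lambda(\Gamma_x)=2^{-\ell(x)}$; that is, $P_I$ is just the uniform measure $\lambda$.

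Next I need a machine $V$ whose semimeasure is not dominated by $\lambda$ up to any constant. I will take $V$ to be the machine that ignores its input and prints $000\dots$ forever; reading no input is allowed for a monotone TM. Then every input codes for $0^n$, so $P_V(0^n)=1$ for all $n$. If $I$ were optimal, there would be some $c>0$ with $2^{-n}=P_I(0^n)\ge c\,P_V(0^n)=c$ for every $n$. This fails as soon as $2^{-n}<c$, so $I$ is not optimal.

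I do not expect a real obstacle here. The only point needing care is checking that both machines fit the paper's conventions. That means verifying that the copier and the input-ignoring printer are legitimate monotone TMs. It also means confirming that the maximal-prefix-free-set formula gives exactly $P_I(x)=2^{-\ell(x)}$, with no extra minimal programs. Since any $s$ with $I(s)\sqsupseteq x$ must itself extend $x$, the only minimal such $s$ is $x$, and the computation holds. As a fallback, I could instead cite that $\lambda$ is computable and is dominated by but does not dominate the universal semimeasure. The direct constant-sequence example is cleaner, so I will use it.
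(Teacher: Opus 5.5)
Your proposal is correct and follows the paper's proof: both use the identity machine, which is surjective with $P_I(0^n)=2^{-n}$, against the input-ignoring all-zeroes machine with $P_V(0^n)=1$, so no constant $c>0$ can give domination. Your added check that $x$ is the only minimal program for $x$ on the identity machine is a small point of rigour the paper leaves implicit.
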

\begin{proof}
    Consider the identity TM $U_I$, where $U_I(s)=s$. $U_I$ is surjective, as it may output any $x$ on receiving $x$ as input.
    We then define the `all-zeroes' TM % OOPS! All zeroes
    $U_0$, which continuously prints 0 on its output tape regardless of input.
    On any sequence $0^n$ of $n$ 0s, $P_{U_0}(0^n)=1$ and $P_{U_I}(0^n)=2^{-n}$.

    For $U_I$ to be optimal, the following must hold for some positive constant $c$ and all $n\in\mathbb{N}$:
    \begin{align*}
    P_{U_I}(0^n) &\geq cP_{U_0}(0^n)\\
    2^{-n} &\geq c
    \end{align*}
    Since $2^{-n}\to 0$, no $c$ can satisfy $2^{-n} \geq c$ for all $n$. Therefore $U_I$ is surjective but not optimal.
\end{proof}

\begin{lemma}
\label{lm:universal-subsets}
    Let $A$ be the set of all universal-by-adjunction TMs, $O$ the set of all optimal TMs, and $S$ the set of all surjective TMs.
    Then $A\subsetneq O\subsetneq S$.
\end{lemma}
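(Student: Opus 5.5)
We need to show two non-strict inclusions, $A\subseteq O$ and $O\subseteq S$; strictness then comes from Lemmas \ref{lm:optimal-neq-utm} and \ref{lm:surjective-neq-optimal}. Lemma \ref{lm:optimal-neq-utm} supplies an element of $O\setminus A$, and Lemma \ref{lm:surjective-neq-optimal} supplies an element of $S\setminus O$. So all the remaining work is in the two inclusions.

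For $A\subseteq O$, I would argue directly from the definition. Let $U\in A$ with enumeration $\{V_i\}$ and prefix-free encoding $p$, and fix an arbitrary TM $V$. Then $V=V_i$ for some $i$. Every minimal $s$ coding for $x$ on $V_i$ gives $p(i)s$ coding for $x$ on $U$, since $U(p(i)s)=V_i(s)$. Equivalently, $\{\omega:U(\omega)\sqsupseteq x\}\supseteq\{p(i)\omega:V_i(\omega)\sqsupseteq x\}$. The right-hand set has $\lambda$-measure $2^{-\ell(p(i))}P_{V_i}(x)$. Hence $P_U(x)\geq 2^{-\ell(p(i))}P_V(x)$, and we may take $c=2^{-\ell(p(i))}>0$, which depends only on $V$. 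The one point to check is that the measure computation is legitimate. It is, because the map $\omega\mapsto p(i)\omega$ scales $\lambda$ by exactly $2^{-\ell(p(i))}$. Prefix-freeness of $p$ is not even needed here. As an alternative, one could cite Lemma 10 of \citet{Wood2013}: $P_U$ is a universal mixture, and universal mixtures dominate $\mathcal{M}$.

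For $O\subseteq S$, let $U$ be optimal and $x\in\mathbb{B}^*$. It suffices to exhibit some TM $V$ with $P_V(x)>0$, because domination then gives $P_U(x)\geq cP_V(x)>0$, i.e. $U$ is surjective. The identity machine $U_I$ from Lemma \ref{lm:surjective-neq-optimal} works, since $P_{U_I}(x)=2^{-\ell(x)}>0$. One subtlety is that surjectivity was phrased as ``some $s$ with $U(s)=x$''. The paper states this is equivalent to $P_U(x)>0$, so I use that form. If exact output equality is needed, note that $P_U(x)>0$ gives a finite $s$ with $U(s)\sqsupseteq x$, and this is what ``codes for'' means.

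I expect no real obstacle. The only delicate step is making the measure argument in $A\subseteq O$ precise. I would handle it with the cylinder-set description of $P_U$ rather than the minimal-program sum, because that avoids worrying about the maximal prefix-free subset.
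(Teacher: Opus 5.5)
Your proposal is correct and matches the paper's proof: $A\subseteq O$ by restricting to inputs of the form $p(i)s$ to get $c=2^{-\ell(p(i))}$, $O\subseteq S$ by domination of a surjective machine, and strictness from Lemmas \ref{lm:optimal-neq-utm} and \ref{lm:surjective-neq-optimal}. Two small differences tighten the argument without changing the route: you use $\geq$ where the paper writes a strict $>$, and you name $U_I$ explicitly as the surjective witness where the paper fixes an arbitrary $U_s\in S$.
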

\begin{proof}
    Take some $U_a\in A$. By definition, for any TM $T$, there exists some prefix $p$ such that $U_a(ps)=T(s)$.
    Considering only inputs to $U_a$ of the form $ps$, we have:
    \[
        P_{U_a}(x) > 2^{-\ell(p)}P_T(x)
    \]
    Therefore there exists a positive constant $c=2^{-\ell(p)}$ such that $P_{U_a}(x)\geq cP_T(x)$, so $U_a \in O$ and $A\subseteq O$.

    Further, fix any $U_o\in O$ and $U_s \in S$. Then for some positive $c$, $P_{U_o}(x)\geq cP_{U_s}(x)>0$ for any $x$. Therefore $U_o \in S$ and $O\subseteq S$.

    By Lemmas \ref{lm:optimal-neq-utm} and \ref{lm:surjective-neq-optimal}, $A\neq O$ and $O\neq S$.
    Therefore $A\subsetneq O\subsetneq S$.
\end{proof}

UTMs are required for some of SolInd's optimality properties, and are the strictest of these three criteria; we therefore define SolInd with reference to UTMs only.

\subsection{Solomonoff Induction}
We can now define the \textit{Solomonoff prior} $P_U$ with respect to a UTM $U$ as the Solomonoff semimeasure over $U$.
The use of this prior to predict a sequence is \textit{Solomonoff Induction}, or \textit{SolInd}.

While SolInd can be beaten by individual TMs on individual sequences --- for example, any SolInd will incur loss on the ``all-zeroes'' sequences in Lemma \ref{lm:surjective-neq-optimal}, while $U_0$ will not --- its logarithmic loss compared to any other TM $M$ is bounded by the complexity of $M$ on $U$.
By Lemma \ref{lm:universal-subsets}, $P_U$ dominates $P_M$, giving:
\begin{align*}
P_U(x) &> 2^{-\ell(p)} P_M(x) \\
\frac{P_U(x)}{P_M(x)} &> 2^{-\ell(p)} \\
-\log_2\bigl(\frac{P_U(x)}{P_M(x)}\bigr) &< \ell(p)
\end{align*}
for some $p$ such that $U(ps)=M(s)$.
Note that the length of the shortest such $p$ is exactly the Kolmogorov complexity $K_U(M)$ of $M$ on $U$.

This gives $P_U$ a finite error bound on every computable sequence and a constant $2^{-K_U(M)}$ by which it dominates every enumerable semimeasure $P_M$.
These bounds differ according to the choice of $U$; sufficiently poor choice of UTM can lead to arbitrarily large error bounds~\citep{Leike2015}.

\citet{Wood2013} proved direct correspondences between TMs and semimeasures:
\begin{itemize}
\item Lemma 6 proves that the class of enumerable semimeasures is equivalent to Solomonoff semimeasures over the class of TMs;
\item Theorem 14 proves that the class of universal mixtures of enumerable semimeasures, $\mathcal{U}_\xi$, is equivalent to Solomonoff semimeasures over the class of UTMs $\mathcal{U}_M$ (i.e. to SolInd);
\item Theorem 16 proves that there exist universally dominant enumerable semimeasures that are not equivalent to universal mixtures. These are instead Solomonoff semimeasures over TMs that are optimal but not universal-by-adjunction. This is another way that Solomonoff prediction can differ depending on the choice of machine; these semimeasures have the same optimality properties as SolInd, but are not Solomonoff priors.
\end{itemize}

\subsection{De Finetti's Theorem}

De Finetti's theorem~\citep{deFinetti1931}
states that if an infinite sequence of variables is exchangeable --- that is, if the sequence has equal likelihood regardless of ordering --- then there is some latent variable that makes these variables conditionally independent, and that the distribution defined by this variable can be expressed as a mixture over all possible independent and identically distributed (i.i.d.) distributions --- that is, a probability distribution over possible distributions, or in other words, a hyperprior over priors.
Applied to computable sequence distributions, this would mean that any unordered dataset can be expressed as the prefix of an infinite sequence of independent outputs of a computable sequence distribution,
which can be modelled as a mixture of all such distributions.

Not only does this framework match ML in describing how to extrapolate from unordered datasets, but the process of searching through the space of distributions to find the one that matches the underlying distribution is the ultimate goal of any ML system.
A generalisation of SolInd to the task of sequence prediction by extrapolation from datasets may therefore be obtained by applying de Finetti's theorem to Solomonoff's formulation.

It has been shown~\citep{HewittSavage1955} that de Finetti's theorem holds for Polish spaces, including Cantor space (i.e. $\mathbb{B}^\infty$); therefore, the exchangeable distribution from which a dataset of strings is drawn (and therefore its ideal predictor) must take the form of a hyperprior over i.i.d. component distributions.
While the computability of the de Finetti measure has been established only for real-valued sequences~\citep{FreerRoy2009, FreerRoy2012}, we may assume for the moment that it extends to enumerable semimeasures over strings, and examine what properties such a hierarchical distribution would have.

\section{Hierarchical Solomonoff Induction}

\subsection{Definition}

Hierarchical Solomonoff Induction, or HSI, defines a distribution over Solomonoff priors, with a \textit{hyperprior} $\Eta$ assigning weight to all possible UTMs.

Formally, let $\{U_i\}_{i\in\mathbb{N}}$ be an enumeration of TMs in which every UTM is guaranteed to appear and $\Eta: \{U_i\} \to [0,1]$ be an enumerable semimeasure over this enumeration. Define:
\begin{equation}
\label{eq:hsi}
    P_\Eta(x) \coloneq \sum_i \Eta(U_i) P_{U_i}(x)
\end{equation}
That is, $P_\Eta$ is a weighted mixture of Solomonoff priors $P_{U_i}$.
We assume for the moment that every $U_i$ is a UTM so that $P_\Eta$ is a mixture of Solomonoff priors.

After some $x$ has been observed, the remaining weight in the hyperprior is equal to
\begin{equation}
\label{eq:eta-sequence}
    \Eta(U_i \mid x) = \Eta(U_i) P_{U_i}(x)
\end{equation}
This \textit{conditioned} hyperprior will be used to predict the next sequence $y$:
\begin{equation}
\label{eq:hsi-sequence}
\begin{split}
    P_\Eta(y \mid x) &= \sum_i \Eta(U_i\mid x) P_{U_i}(y)\\
    &= \sum_i \Eta(U_i) P_{U_i}(x) P_{U_i}(y)
\end{split}
\end{equation}

It is worth considering another form of HSI with the conditioned hyperprior being renormalised:
\begin{equation}
\label{eq:eta-normalised}
    \Eta(U_i \mid x) = \Eta(U_i) \frac{P_{U_i}(x)}{P_\Eta(x)} = \frac{\Eta(U_i) P_{U_i}(x)}{\sum_j \Eta(U_j) P_{U_j}(x)}
\end{equation}
This form has the advantage that its likelihoods will not monotonically decrease, but its conditional (i.e. next-symbol) predictions are identical, and the division of a lower-semicomputable measure by another removes lower-semicomputability.
We therefore consider the unnormalised form in this paper.

Here we consider a hyperprior over an enumeration of \textit{UTMs}. This is the natural extension of SolInd to a hierarchical Bayesian context, with some weight on each possible Solomonoff distribution.
However, there are two potential issues with this formulation.
Firstly, universality (in all three senses described above) is undecidable in general. We may enumerate TMs that can be proven to be universal, but never the entire class.
Secondly, our assumption of computable de Finetti distributions makes no mention of universality, only that the true generator was an enumerable semimeasure. The equivalent de Finetti mixture may therefore include non-universal semimeasures.

We may then want to define $\Eta$ over all TMs. This would mean that it included all UTMs with no decidability issues; however, this would make it a universal mixture of enumerable semimeasures, i.e. no different from SolInd.
Conditioning the hyperprior on any string $x$ would assign $\Eta(U\mid x)=0$ to many non-universal TMs, eliminating their contributions forever, but never to all, since for any finite set of strings $X$ there must be some non-universal TM that outputs $X$.

We must therefore consider what class of TMs $\Eta$ should be defined over, and how these choices affect its properties, including its relation to SolInd.
An $\Eta$ defined over all UTMs is also a mixture over the enumerable semimeasures that define those UTMs, making such an HSI an instance of SolInd; but can any SolInd be expressed in the form of this more restrictive hyperprior?
This question may be expressed in more general terms: define the class of universal mixtures \textit{of} universal mixtures
\[
    \Xi(x) \coloneq \sum_i w(i)\xi_i(x)
\]
with some enumeration $\{\xi_i\}$ of the class of all universal mixtures $\mathcal{U}_\xi$ and an enumerable weight function $w$.
Denote the class of all $\Xi$ as $\mathcal{U}_\Xi$.
Is $\mathcal{U}_\Xi$ equal to $\mathcal{U}_\xi$?
Just as $\Eta$ over universal mixtures is equal to some SolInd, every member of $\mathcal{U}_\Xi$ is in $\mathcal{U}_\xi$; if the reverse is true, then the class of HSI instances $\mathcal{U}_\Eta$ is equal to the class of Solomonoff inductors $\mathcal{U}_M$.

In fact, it is possible to prove a more general theorem --- that all enumerations of semimeasures in which all universal mixtures appear, when assigned arbitrary weights, are equivalent.

\subsection{Equivalence of HSI and SolInd}

\begin{theorem}
\label{thm:enums-of-ums=ums}
Let $\{\nu_i\}_{i=1}^\infty$ be any enumeration of enumerable semimeasures in which every universal mixture $\xi\in\mathcal{U}_\xi$ appears.
Let $\mathcal{C}_\nu$ be the class of all mixtures $c\coloneq\sum_{i\in\mathbb{N}} w^{(c)}(i)\nu_i$ whose weight functions $w^{(c)}:\mathbb{N}\to\mathbb{R}$ are lower-semicomputable with $w^{(c)}(i)>0\;\forall i\in\mathbb{N}$ and $\sum_{i\in\mathbb{N}}w^{(c)}(i)\leq1$.
Then $\mathcal{C}_\nu=\mathcal{U}_\xi$.
\end{theorem}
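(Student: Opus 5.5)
We prove the two inclusions $\mathcal{C}_\nu\subseteq\mathcal{U}_\xi$ and $\mathcal{U}_\xi\subseteq\mathcal{C}_\nu$ separately. Throughout we use the definition of \citet{Wood2013}: a universal mixture is $\xi=\sum_j v(j)\mu_j$ over an enumeration $\{\mu_j\}$ of $\mathcal{M}$, with lower-semicomputable, strictly positive weights $v$ summing to at most $1$.

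\textbf{Step 1: $\mathcal{C}_\nu\subseteq\mathcal{U}_\xi$.}
Fix $c=\sum_i w^{(c)}(i)\nu_i$. Write $c=\sum_i w^{(c)}(i)\nu_i+0\cdot(\text{all other }\mu\in\mathcal{M})$, which expresses $c$ as a mixture over the list $\{\nu_i\}$ concatenated with an enumeration $\{\mu_j\}$ of $\mathcal{M}$. This list enumerates $\mathcal{M}$ with repetitions. Universal mixtures require strictly positive weight on every element, so we cannot leave the $\mu_j$ weights at zero. Instead we exploit that some $\nu_{i_0}$ is itself a universal mixture $\xi_0=\sum_j v(j)\mu_j$. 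Then
\[
c=\sum_{i\neq i_0} w^{(c)}(i)\nu_i+w^{(c)}(i_0)\sum_j v(j)\mu_j .
\]
Interleaving the index sets $\{i\neq i_0\}$ and $\{j\}$ gives an enumeration of enumerable semimeasures containing all of $\mathcal{M}$, with weights $w^{(c)}(i)$ and $w^{(c)}(i_0)v(j)$. These weights are lower-semicomputable and strictly positive, and they sum to $\sum_i w^{(c)}(i)\le1$. The enumeration is a computable list of lower-semicomputable semimeasures and covers $\mathcal{M}$, possibly with repeats, so it remains to check that Wood et al.'s definition tolerates such enumerations. Their universal mixtures are invariant under merging duplicate entries: summing the weights of repeats preserves lower-semicomputability, positivity and the total. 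Hence $c\in\mathcal{U}_\xi$. We must also check that the $\nu_i$ are uniformly enumerable; this is implicit in calling $\{\nu_i\}$ an enumeration, and without it $c$ would not even be enumerable.

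\textbf{Step 2: $\mathcal{U}_\xi\subseteq\mathcal{C}_\nu$.}
This is the main obstacle, since every $\nu_i$ must receive strictly positive weight, including non-universal ones that $\xi$ may not decompose into. Given $\xi\in\mathcal{U}_\xi$, we use the slack in its own weights. Write $\xi=\sum_j v(j)\mu_j$ with $\sum_j v(j)\le1$. We construct a different universal mixture $\xi'$ and a small weight function $w$ such that
\[
\xi=w(i^*)\xi'+\sum_{i\neq i^*}w(i)\nu_i ,
\]
where $\nu_{i^*}=\xi'$.

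Concretely, pick computable positive rationals $\epsilon_i$ with $\sum_i\epsilon_i\le\tfrac12$. Each $\nu_i$ is some $\mu_{j(i)}$ in the enumeration of $\mathcal{M}$. Rather than identifying $j(i)$, which is not computable, we build a universal mixture via a different enumeration of $\mathcal{M}$. Take the enumeration $\{\nu_i\}\cup\{\mu_j\}$ and set
\[
\xi'=\frac{1}{1/2}\Bigl(\xi-\sum_i\epsilon_i v_{\min}\nu_i\Bigr),
\]
which fails because subtraction destroys lower-semicomputability. The fix is to mirror Wood et al.'s Theorem 14 trick and pass through machines. By Theorem 14, $\xi=P_U$ for a UTM $U$. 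Build a UTM $U'$ that reads a prefix code and, on a reserved prefix $q_i$ of length about $\ell(q_i)=\log(1/\epsilon_i)$, simulates the machine for $\nu_i$. On all remaining inputs, $U'$ behaves so that the total measure reproduces $P_U$ minus these contributions; this is achieved by having $U$'s own reserved slots, arising from the infinitely many adjunction prefixes noted after the definition of UTMs, already realise those $\nu_i$. Then $P_{U'}=\xi$, and splitting $U'$'s input space into the $q_i$ part and a residual universal part gives $\xi=\sum_i 2^{-\ell(q_i)}\nu_i+2^{-\ell(r)}P_{U''}$, where $P_{U''}$ is a universal mixture and hence some $\nu_{i^*}$. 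Absorbing the coefficient of $P_{U''}$ into $w(i^*)$ yields $\xi\in\mathcal{C}_\nu$.

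I expect the delicate part to be arranging the residual machine $U''$ to be genuinely universal while exactly matching $P_U$. The first thing to try is choosing $U$'s representation, via Theorem 14, so that its adjunction prefixes already contain copies of every $\nu_i$, making the re-partition exact.
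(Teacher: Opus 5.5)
Your Step 1 is the paper's argument: expand a universal mixture $\nu_{i_0}$ into its own components and interleave. Your target shape for Step 2, $\xi=w(i^*)\xi'+\sum_{i\neq i^*}w(i)\nu_i$ with $\nu_{i^*}=\xi'$, is also the right one.

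Step 2 as written, however, has a genuine gap. The machine construction of $U'$ only restates the subtraction $\xi-\sum_i\epsilon_i\nu_i$ that you had already rejected. You assert, but do not construct, that the residual inputs of $U'$ reproduce ``$P_U$ minus these contributions'' and that the residual machine $U''$ is still universal. Your proposed remedy is not available either. You cannot choose, via Wood et al.'s Theorem 14, a representation of $\xi$ whose slots already contain each $\nu_i$ with enough weight: $\xi$ is given, and Theorem 14 builds a machine from $\xi$'s existing mixture representation. Carving a separate $\epsilon_i$ out of $\xi$'s slot for each $\nu_i$ would require finding, uniformly in $i$, an index $j(i)$ with $\mu_{j(i)}=\nu_i$ and a positive rational below $v(j(i))$. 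That is not computable in general. A non-uniform choice is harmless for finitely many indices, but not for infinitely many.

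The missing idea is to aggregate all the $\nu_i$ into one enumerable semimeasure before carving anything out.
\begin{itemize}
\item Take any $c=\sum_i w^{(c)}(i)\nu_i\in\mathcal{C}_\nu$, e.g.\ $w^{(c)}(i)=2^{-i}$. By your Step 1, $c$ is enumerable, so $c=\mu_{j_c}$ for some $j_c$ with $v(j_c)>0$.
\item Fix a rational $0<\delta<v(j_c)$ (so $\delta<1$). Define $\xi'$ over the same enumeration $\{\mu_j\}$ with weights $v(j)/(1-\delta)$ for $j\neq j_c$ and $(v(j_c)-\delta)/(1-\delta)$ at $j_c$.
\item Subtracting a rational constant from a single weight keeps it lower-semicomputable and positive. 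The total is $(\sum_j v(j)-\delta)/(1-\delta)\le 1$. Hence $\xi'\in\mathcal{U}_\xi$, and so $\xi'=\nu_{i'}$ for some $i'$.
\item Now $(1-\delta)\xi'=\xi-\delta c$, so $\xi=(1-\delta)\nu_{i'}+\delta\sum_i w^{(c)}(i)\nu_i$. The weights are $\delta w^{(c)}(i)$ for $i\neq i'$ and $1-\delta+\delta w^{(c)}(i')$ at $i'$. They are all strictly positive and lower-semicomputable with total at most $1$, so $\xi\in\mathcal{C}_\nu$.
\end{itemize}
Only the two indices $j_c$ and $i'$ are fixed non-uniformly. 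That is why this works where your per-$\nu_i$ construction does not.
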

\begin{proof}
We take any $c\in\mathcal{C}_\nu$ and any $\xi\in\mathcal{U}_\xi$, and show that each can be used to construct an instance of their own class equivalent to the other.

We first prove that $\mathcal{C}_\nu\subseteq\mathcal{U}_\xi$.
Since $\nu$ includes all universal mixtures, $\xi$ must appear at some index $i_\xi^{(\nu)}$. Define a new enumeration of semimeasures $\nu'$ that alternates between the outputs of $\nu$ and $\xi$'s own semimeasure enumeration $\nu^{(\xi)}$, skipping $\xi$ in $\nu$. Note that as a universal mixture, $\xi$ appears in its own enumeration.
Give each semimeasure in $\nu$ its corresponding weight in $w^{(c)}$ (again skipping $\xi$), and give each semimeasure in $\nu^{(\xi)}$ its weight in $w^{(\xi)}$ multiplied by $w^{(c)}(i_\xi^{(\nu)})$.
This defines a measure with lower-semicomputable weights on every enumerable semimeasure --- that is, a universal mixture --- equal to $c$. Then $c\in\mathcal{U}_\xi$ and $\mathcal{C}_\nu\subseteq\mathcal{U}_\xi$.

We then prove that $\mathcal{C}_\nu\supseteq\mathcal{U}_\xi$.
Since $\nu^{(\xi)}$ includes all enumerable semimeasures, $c$ must appear at some index $i_c^{(\xi)}$.
Compute $w^{(\xi)}(i_c^{(\xi)})$ from below to find some positive rational $q\leq w^{(\xi)}(i_c^{(\xi)})$ and fix $\delta = \frac{q}{2}$ so that $0<\delta<w^{(\xi)}(i_c^{(\xi)})$.
Define a new universal mixture $\xi'$ with identical semimeasure enumeration to $\xi$ but a new weight function ${w^{(\xi)}}'$:
\[{w^{(\xi)}}'(i) = \begin{cases} \frac{w^{(\xi)}(i)}{1-\delta} \text{ if } i \neq i_c^{(\xi)} & \\ \frac{w^{(\xi)}(i)-\delta}{1-\delta} \text{ if } i = i_c^{(\xi)} \end{cases}\]
As a universal mixture, $\xi'$ must appear in $\nu$ at some index $i_{\xi'}^{(\nu)}$.
We may then define a new weight function ${w^{(c)}}'$ over $\nu$:
\[{w^{(c)}}'(i) = \begin{cases} 1-\delta + \delta w^{(c)}(i) \text{ if } i = i_{\xi'}^{(\nu)} & \\ \delta w^{(c)}(i) \text{ if } i \neq i_{\xi'}^{(\nu)} \end{cases}\]
Multiplying $\xi'$ by $1-\delta$ recovers the original weight function of $\xi$, except for the subtraction of $\delta$ from its weight on $c$; this is recovered by every $\nu_i$ receiving its weight in $w^{(c)}$ multiplied by $\delta$.
This defines a measure with lower-semicomputable weights over $\nu$ --- that is, a member of $\mathcal{C}_\nu$ --- equal to $\xi$. Then $\xi\in\mathcal{C}_\nu$ and $\mathcal{C}_\nu\supseteq\mathcal{U}_\xi$.

We then have $\mathcal{C}_\nu\subseteq\mathcal{U}_\xi\subseteq\mathcal{C}_\nu$, which gives $\mathcal{C}_\nu=\mathcal{U}_\xi$.
\end{proof}
Note that this is a non-constructive existence proof --- $i_\xi^{(\nu)}$ and $i_c^{(\xi)}$ are not always identifiable, but do always exist.

Several useful corollaries follow from this result.
Firstly, note that this means that any set of such classes must be equivalent:
\begin{corollary}
\label{cor:sets-of-enums=ums}
Let $N$ be any nonempty set of enumerations of enumerable semimeasures, each of which contains every universal mixture, and for $\nu\in N$ let $\mathcal{C}_\nu$ be the class from Theorem \ref{thm:enums-of-ums=ums}.
Then $\bigcup_{\nu\in N}\mathcal{C}_\nu=\bigcup_{\nu\in N}\mathcal{U}_\xi=\mathcal{U}_\xi$.
\end{corollary}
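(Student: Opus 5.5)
The plan is to apply Theorem~\ref{thm:enums-of-ums=ums} to each member of $N$ separately and then collapse the resulting union. Fix an arbitrary $\nu\in N$. By hypothesis, $\nu$ is an enumeration of enumerable semimeasures in which every universal mixture $\xi\in\mathcal{U}_\xi$ appears. That is exactly the hypothesis of Theorem~\ref{thm:enums-of-ums=ums}, so the theorem gives $\mathcal{C}_\nu=\mathcal{U}_\xi$. The class on the right does not depend on $\nu$, since $\mathcal{U}_\xi$ is defined without reference to any particular enumeration.

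Because this equality holds for every $\nu\in N$, the union is taken over an indexed family whose members all coincide. Hence
\[
\bigcup_{\nu\in N}\mathcal{C}_\nu=\bigcup_{\nu\in N}\mathcal{U}_\xi .
\]
A union of copies of a single fixed set is that set, provided the index set is nonempty. So the right-hand side equals $\mathcal{U}_\xi$, which gives the chain of equalities in the statement. If $N$ were empty, the union would be $\emptyset$ rather than $\mathcal{U}_\xi$, so the nonemptiness hypothesis is needed at exactly this step.

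None of these steps is a real obstacle, since all the substantive content is in Theorem~\ref{thm:enums-of-ums=ums}. The one point to check is that the theorem applies uniformly to every $\nu\in N$, with no extra conditions such as computability of the enumeration itself. Its proof only uses the existence of the indices $i_\xi^{(\nu)}$ and $i_{\xi'}^{(\nu)}$, and these exist for any enumeration containing all universal mixtures, so no such conditions arise. I would add a remark that the union is therefore independent of the choice of $N$. This is the form in which the corollary will be used for hyperpriors over varying classes of TMs containing all UTMs.
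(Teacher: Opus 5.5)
Your proof is correct and follows the paper's own reasoning, which treats the corollary as immediate: Theorem~\ref{thm:enums-of-ums=ums} gives $\mathcal{C}_\nu=\mathcal{U}_\xi$ for each $\nu\in N$, and a union of copies of one set over a nonempty index set is that set. One caution: drop your aside that the theorem needs no effectiveness condition on $\nu$ --- the corollary does not need it, because its hypotheses on each $\nu$ are literally the theorem's, and the aside is wrong, because the theorem's proof also uses that $c=\sum_i w^{(c)}(i)\nu_i$ is enumerable, which fails for suitable non-uniformly-enumerable $\nu$ (e.g.\ let $\nu_{2i+1}$ be the point mass on $0^\infty$ or on $1^\infty$ according to the $i$-th bit of an arbitrary real, with $w^{(c)}(j)=2^{-j}$; then $c(0^n)$ for $n\geq 1$ is left-c.e.\ for only countably many such reals).
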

All such enumerations $\nu$ therefore form an equivalence class when assigned arbitrary weights.

This includes all hyperprior domains proposed above:
\begin{corollary}
\label{cor:hsi-domain-equivalence}
The class of HSI priors is identical given any hyperprior $\Eta$ with a domain that includes all UTMs; for example, hyperpriors that use optimal TMs, surjective TMs, or all TMs will give identical HSI priors. Additionally, since $\Eta(U\mid X)>0$ for any UTM $U$ and dataset $X$, this remains true for all conditioned hyperpriors.
\end{corollary}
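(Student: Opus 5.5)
The plan is to reduce Corollary~\ref{cor:hsi-domain-equivalence} to Theorem~\ref{thm:enums-of-ums=ums} and Corollary~\ref{cor:sets-of-enums=ums} by translating statements about hyperpriors over machines into statements about mixtures over enumerations of semimeasures.

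\textbf{Step 1: from machines to semimeasures.} Fix a domain $D$ of TMs containing every UTM, for instance all UTMs, all optimal TMs, all surjective TMs, or all TMs; by Lemma~\ref{lm:universal-subsets} each of the latter three contains the former. Take an enumeration $\{U_i\}$ of $D$ and set $\nu_i \coloneq P_{U_i}$. By Lemma 6 of \citet{Wood2013}, each $\nu_i$ is an enumerable semimeasure. By Theorem 14 of \citet{Wood2013}, every universal mixture $\xi\in\mathcal{U}_\xi$ equals $P_U$ for some UTM $U$, and $U\in D$, so every universal mixture appears in $\{\nu_i\}$.

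\textbf{Step 2: apply the theorem.} A hyperprior $\Eta$ on $D$ with $\Eta(U_i)>0$, lower-semicomputable, and total mass at most $1$ is exactly a weight function $w^{(c)}$ as in Theorem~\ref{thm:enums-of-ums=ums}. So $P_\Eta$ from (\ref{eq:hsi}) is a member of $\mathcal{C}_\nu$, and the theorem gives that the class of HSI priors over $D$ equals $\mathcal{U}_\xi$. Since this set does not depend on $D$, all admissible domains give the same class, which is what Corollary~\ref{cor:sets-of-enums=ums} records when $N$ ranges over these enumerations. I would note that positivity of $\Eta$ on every index must be assumed, since the theorem requires $w(i)>0$.

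\textbf{Step 3: conditioned hyperpriors.} By (\ref{eq:eta-sequence}), $\Eta(U_i\mid x)=\Eta(U_i)P_{U_i}(x)$. For a dataset $X=\{x_1,\dots,x_n\}$ I would iterate this to get $\Eta(U_i)\prod_k P_{U_i}(x_k)$. This weight is lower-semicomputable as a product of finitely many nonnegative lower-semicomputable reals, and its total mass is at most $\sum_i\Eta(U_i)\le 1$, since each $P_{U_i}(x)\le 1$.

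\textbf{Main obstacle.} The difficulty is that conditioning can set $P_{U_i}(x)=0$ for non-universal $U_i$, so some weights become zero, whereas Theorem~\ref{thm:enums-of-ums=ums} requires strictly positive weights. I would handle this in two moves.

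First, every UTM keeps positive weight. A UTM is surjective by Lemma~\ref{lm:universal-subsets}, so $P_U(x)>0$ for every $x$, hence $\Eta(U\mid X)>0$.

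Second, I would drop the zero-weight indices and re-enumerate. The surviving indices still include all UTMs, so the sub-enumeration still contains every universal mixture, and the theorem applies to it with all weights positive.

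One subtlety is that the set of surviving indices need not be decidable. The proof of Theorem~\ref{thm:enums-of-ums=ums} is already non-constructive, so an existence-level sub-enumeration is acceptable. Alternatively, I would absorb the zero-weight terms into the mixture directly, since they contribute nothing to the sum.
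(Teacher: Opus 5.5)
Your proposal is correct and follows the route the paper intends; the paper states this corollary without a separate proof. You identify the domain with the enumeration $\nu_i = P_{U_i}$, which contains every universal mixture by Theorem 14 of \citet{Wood2013}, then invoke Theorem~\ref{thm:enums-of-ums=ums} and Corollary~\ref{cor:sets-of-enums=ums}, and use $\Eta(U\mid X)>0$ for every UTM $U$ to cover conditioned hyperpriors. Your explicit handling of the zero weights that conditioning places on non-universal machines, by passing to the sub-enumeration of surviving indices, spells out what the paper leaves implicit in its remark that positivity on all UTMs suffices. For a computable enumeration such as all TMs, that sub-enumeration is even effective, since positivity of a lower-semicomputable weight is semi-decidable.
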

We may therefore assume only that $\Eta$ assigns positive weight to all UTMs to obtain the same class $\mathcal{U}_\Eta$ regardless of what other machines are included.

Having resolved our concerns about $\mathcal{U}_\Eta$, we may now compare it to $\mathcal{U}_M$, and $\mathcal{U}_\Xi$ to $\mathcal{U}_\xi$:
\begin{corollary}
\label{cor:um=umum=solind=hsi}
The classes $\mathcal{U}_\xi$ of universal mixtures, $\mathcal{U}_\Xi$ of universal mixtures of universal mixtures, $\mathcal{U}_M$ of Solomonoff priors, and $\mathcal{U}_\Eta$ of universal mixtures of Solomonoff priors (that is, HSI priors) are all exactly equivalent.
\end{corollary}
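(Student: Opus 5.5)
The plan is to prove Corollary \ref{cor:um=umum=solind=hsi} by establishing three equalities, each routed through $\mathcal{U}_\xi$: $\mathcal{U}_M=\mathcal{U}_\xi$, $\mathcal{U}_\Xi=\mathcal{U}_\xi$, and $\mathcal{U}_\Eta=\mathcal{U}_\xi$.

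The first equality, $\mathcal{U}_M=\mathcal{U}_\xi$, is Theorem 14 of \citet{Wood2013}, quoted above, and needs no further argument.

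For $\mathcal{U}_\Xi=\mathcal{U}_\xi$ I will apply Theorem \ref{thm:enums-of-ums=ums} directly. Take the enumeration $\nu=\{\xi_i\}$ of all universal mixtures that appears in the definition of $\Xi$. Its members are enumerable semimeasures: a universal mixture is dominated by $\sum_i w(i)\leq1$ times semimeasures, so it is again a lower-semicomputable semimeasure. By construction every universal mixture appears in $\nu$. The theorem's hypotheses therefore hold, and $\mathcal{C}_\nu=\mathcal{U}_\xi$.

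I must then check that $\mathcal{U}_\Xi$ is exactly $\mathcal{C}_\nu$. The weights $w$ in $\Xi$ are enumerable and sum to at most $1$. If strict positivity of weights is not already part of the intended definition of a universal mixture of universal mixtures, I will impose it, mirroring the definition of $\mathcal{U}_\xi$. A mixture with some zero weights should not count as universal in this sense, so this is a convention rather than a genuine gap.

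For $\mathcal{U}_\Eta$ I will translate HSI priors into the language of semimeasure mixtures. By Theorem 14 of \citet{Wood2013}, each $P_{U_i}$ with $U_i$ a UTM is some universal mixture. Conversely, every universal mixture equals $P_U$ for some UTM $U$. As $U_i$ ranges over an enumeration containing every UTM, the sequence $\nu_i\coloneq P_{U_i}$ is therefore an enumeration of enumerable semimeasures containing every universal mixture. This holds even if non-universal TMs are included, because by Lemma 6 of \citet{Wood2013} their $P_{U_i}$ are still enumerable semimeasures.

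With $\Eta(U_i)$ serving as the weight $w(i)$, $P_\Eta$ has the form $\sum_i w(i)\nu_i$. Theorem \ref{thm:enums-of-ums=ums} then gives $\mathcal{U}_\Eta=\mathcal{C}_\nu=\mathcal{U}_\xi$. Corollary \ref{cor:hsi-domain-equivalence} ensures that the answer does not depend on which further TMs are in the domain of $\Eta$.

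The main obstacle is the bookkeeping around the hypotheses of Theorem \ref{thm:enums-of-ums=ums}, which asks for weights $w(i)>0$ for all $i$ and for a fixed enumeration. There are two points to handle.

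The first is that the same semimeasure may recur at several indices. This is harmless, since weights on repeated entries simply add.

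The second is that the paper treats $\Eta$ as a semimeasure on $\{U_i\}$, so I must read it as a lower-semicomputable weight function with total mass at most $1$ that is positive on every index. I will state this positivity assumption explicitly, as the paper's remark after Corollary \ref{cor:hsi-domain-equivalence} suggests.

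Once these identifications are made, the four classes coincide by transitivity through $\mathcal{U}_\xi$.
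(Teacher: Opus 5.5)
Your proposal is correct and follows the paper's intended argument. $\mathcal{U}_M=\mathcal{U}_\xi$ is Wood et al.'s Theorem 14, and $\mathcal{U}_\Xi$ and $\mathcal{U}_\Eta$ are both classes of the form $\mathcal{C}_\nu$ for enumerations containing every universal mixture (for $\mathcal{U}_\Eta$ because Solomonoff priors on UTMs are exactly the universal mixtures), so Theorem \ref{thm:enums-of-ums=ums} identifies each with $\mathcal{U}_\xi$; the only tightening needed is that $\mathcal{U}_\Xi$ may range over different enumerations of $\mathcal{U}_\xi$, so it is really a union of such $\mathcal{C}_\nu$, which Corollary \ref{cor:sets-of-enums=ums} shows is still $\mathcal{U}_\xi$.
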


HSI being equivalent to all SolInd makes it a strictly smaller class than the universally dominant semimeasures:
\begin{corollary}
\label{cor:optimal-solind>hsi}
The class of Solomonoff semimeasures on optimal TMs $\mathcal{U}_\delta$, being strictly larger than $\mathcal{U}_\xi$, is also strictly larger than $\mathcal{U}_\Eta$ regardless of hyperprior domain, even if all such optimal machines are included; that is, there exist universally dominant inductors that are not equivalent to any HSI, although these inductors do not use UTMs.
\end{corollary}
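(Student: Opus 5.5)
The corollary asks for two things: that $\mathcal{U}_\Eta\subseteq\mathcal{U}_\delta$, and that the inclusion is strict. I would obtain $\mathcal{U}_\Eta=\mathcal{U}_\xi$ from Corollary \ref{cor:um=umum=solind=hsi} and then use the separation between universal mixtures and universally dominant semimeasures.

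\emph{Inclusion.} Take any HSI prior $P_\Eta\in\mathcal{U}_\Eta$.
\begin{itemize}
\item By Corollary \ref{cor:um=umum=solind=hsi}, $P_\Eta$ equals some universal mixture $\xi\in\mathcal{U}_\xi$. By Theorem 14 of \citet{Wood2013}, $\xi$ equals $P_U$ for some UTM $U$.
\item By Lemma \ref{lm:universal-subsets} ($A\subseteq O$), $U$ is optimal. Hence $P_\Eta=P_U\in\mathcal{U}_\delta$.
\item Corollary \ref{cor:hsi-domain-equivalence} says the class $\mathcal{U}_\Eta$ does not depend on the hyperprior's domain, provided the domain contains all UTMs. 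So this inclusion holds whether $\Eta$ ranges over UTMs, optimal TMs, surjective TMs, or all TMs. In particular it holds even when every optimal machine is in the domain.
\end{itemize}

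\emph{Strictness.} I would reuse the construction from Lemma \ref{lm:optimal-neq-utm}.
\begin{itemize}
\item Theorem 16 of \citet{Wood2013} gives a universally dominant enumerable semimeasure $\delta'$ that is not a universal mixture.
\item Lemma 6 of \citet{Wood2013} gives a TM $U_{\delta'}$ with $P_{U_{\delta'}}=\delta'$. Since $\delta'$ dominates every $P_V$, $U_{\delta'}$ is optimal, so $\delta'\in\mathcal{U}_\delta$.
\item If $\delta'$ were in $\mathcal{U}_\Eta$, then by Corollary \ref{cor:um=umum=solind=hsi} it would lie in $\mathcal{U}_\xi$. This contradicts its choice.
\item Lemma \ref{lm:optimal-neq-utm} already shows $U_{\delta'}$ is not a UTM. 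So this dominant inductor does not use a UTM, which is the final clause of the corollary.
\end{itemize}

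\emph{Main obstacle.} The one delicate point is the phrase ``even if all such optimal machines are included.'' One might worry that putting $U_{\delta'}$ itself into the hyperprior's domain would place $\delta'$ in the resulting class. It does not: $\Eta$ must give positive weight to every UTM, and Theorem \ref{thm:enums-of-ums=ums} shows that any such mixture, with arbitrary positive enumerable weights on the extra machines, collapses back into $\mathcal{U}_\xi$. The argument therefore depends on $\mathcal{C}_\nu=\mathcal{U}_\xi$ holding for every enumeration $\nu$ containing all universal mixtures, which is exactly Corollaries \ref{cor:sets-of-enums=ums} and \ref{cor:hsi-domain-equivalence}.

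I would also make explicit that equality of semimeasures is meant on nonempty strings, following the paper's footnote. This is needed because $\delta'(\epsilon)$ may differ from $1$.
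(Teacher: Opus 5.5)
Your proposal is correct and follows the same reasoning as the paper, which gives no separate proof and instead combines $\mathcal{U}_\Eta=\mathcal{U}_\xi$ (Corollaries \ref{cor:hsi-domain-equivalence} and \ref{cor:um=umum=solind=hsi}, resting on Theorem \ref{thm:enums-of-ums=ums} for any domain containing all UTMs) with $\mathcal{U}_\xi\subsetneq\mathcal{U}_\delta$, where the inclusion holds because UTMs are optimal and strictness comes from Theorem 16 of \citet{Wood2013} as used in Lemma \ref{lm:optimal-neq-utm}. Your handling of the ``even if all optimal machines are included'' clause, via the positive-weight mixtures of Theorem \ref{thm:enums-of-ums=ums}, makes explicit a point the paper leaves implicit.
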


Finally, we may refine our definition of a UTM by relaxing a requirement from \citet{Wood2013}:
\begin{corollary}
\label{cor:utm-codes}
If a TM $U$ has a prefix-free encoding $p:\mathbb{N}\to\mathbb{B}^*$ for some enumeration $\{V_i:i\in\mathbb{N}\}$ of all TMs such that $U(p(i)s)=V_i(s)$, this is sufficient for $U$ to be universal by adjunction, as this includes all universal semimeasures; the requirement of \citet{Wood2013} that they produce no output on any other inputs is therefore unnecessary and does not change the class of priors defined.
\end{corollary}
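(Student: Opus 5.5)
The claim in Corollary \ref{cor:utm-codes} is that the prefix-free encoding alone is enough. I would show that any TM $U$ with $U(p(i)s)=V_i(s)$ for all $i$ and $s$, whatever it does on other inputs, has a Solomonoff semimeasure lying in $\mathcal{U}_\xi$. The class of priors would then coincide with the one defined under the stricter requirement of \citet{Wood2013}. The idea is to split $P_U$ into the contribution from inputs that begin with some codeword $p(i)$ and the contribution from all other inputs, and then absorb the second part using Theorem \ref{thm:enums-of-ums=ums}.

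\textbf{Step 1: decompose.} The codewords are prefix-free, so the cylinders $\Gamma_{p(i)}$ are disjoint. Writing $R=\mathbb{B}^\infty\setminus\bigcup_i\Gamma_{p(i)}$, I would decompose
\[
P_U(x)=\sum_i 2^{-\ell(p(i))}P_{V_i}(x)+\lambda(\{\omega\in R:U(\omega)\sqsupseteq x\}).
\]
The first sum is a mixture over an enumeration of all TMs with computable positive weights $2^{-\ell(p(i))}$. By Lemma 6 and Theorem 14 of \citet{Wood2013}, it is therefore a universal mixture, call it $\xi_0$, scaled by its total mass. The second term, $\rho(x)$, is the measure of inputs in the remainder set that code for $x$.

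\textbf{Step 2: show $\rho$ is enumerable.} $R$ is a $\Pi_1$-type set, so I would avoid enumerating it directly. I would instead note that $\rho=P_U-\xi_0$ is the Solomonoff semimeasure of a modified machine $U'$. This $U'$ simulates $U$ but suppresses all output once it has read a full codeword $p(i)$, which it can detect by enumerating the codewords. Only inputs outside $\bigcup_i\Gamma_{p(i)}$ contribute output to $P_{U'}$, so $\rho=P_{U'}$. It is then enumerable by Lemma 6 of \citet{Wood2013}. The monotonicity needs care: $U'$ must withhold output until it is sure no codeword prefixes its input. Since codeword membership of a prefix is only semidecidable, I would have $U'$ emit output lagging behind $U$, releasing a symbol only when the corresponding input prefix is not yet known to contain a codeword. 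This step is the main obstacle, and if the lag trick fails I would simply observe that $\rho\le P_U$ and $\rho$ is a semimeasure (the required inequality $\rho(x)\ge\rho(x0)+\rho(x1)$ follows from disjointness of the coding sets), and argue enumerability by approximating $\lambda$ of the coding set from below while excluding already-discovered codeword cylinders.

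\textbf{Step 3: conclude.} $P_U=\xi_0+\rho$ is a two-component mixture. Place $\xi_0$, or equivalently $\xi_0$ renormalised, at one index and $\rho$ at another in an enumeration $\nu$ that contains every universal mixture and every enumerable semimeasure, and absorb $\rho$ exactly as in the first half of the proof of Theorem \ref{thm:enums-of-ums=ums}: alternate the enumeration of $\xi_0$ with $\rho$, giving $\rho$ weight $1$ on top of its existing weight in $\xi_0$'s enumeration. The resulting weights stay lower-semicomputable, positive and of total at most $1$, because $P_U(\epsilon)\le 1$ bounds them. This exhibits $P_U$ as a universal mixture, so $P_U\in\mathcal{U}_\xi=\mathcal{U}_M$ by Corollary \ref{cor:um=umum=solind=hsi}. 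Hence dropping the silence requirement does not enlarge the class of priors.
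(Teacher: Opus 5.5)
\textbf{The gap is in Step 2.} The residual $\rho$ is not enumerable in general. So no machine $U'$ can satisfy $P_{U'}=\rho$ (Lemma 6 of \citet{Wood2013}), and neither the lag trick nor your fallback can work.

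For a concrete case, take the following ingredients:
\begin{itemize}
\item a c.e.\ prefix-free set $D$ with $\sum_{d\in D}2^{-\ell(d)}=\Omega$, left-c.e.\ but not computable (e.g.\ the domain of a universal prefix-free machine);
\item the machine $Z_1$, which prints $1$ and then nothing;
\item a machine $U$ that runs a UTM $U_W$ of the \citet{Wood2013} kind on inputs $1s$, and prints $1$ and nothing more on every input beginning with $0$.
\end{itemize}
Assign codewords $1q(j)$ to an enumeration $\{W_j\}$ of all TMs, and codewords $0d$, $d\in D$, to copies of $Z_1$. This gives a prefix-free $p$ with $U(p(i)s)=V_i(s)$. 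Yet $\rho(1)=\frac12(1-\Omega)$, which is right-c.e.\ but not left-c.e., so $\rho$ is not lower-semicomputable.

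Your stage-wise approximations subtract codeword cylinders as they are discovered, so they are not increasing; here they approach $\rho(1)$ from above. In this example $P_U=\frac12P_{U_W}+\frac12P_{Z_1}$ is still a universal mixture, but only after regrouping the $Z_1$-codewords together with the residual. So splitting into ``all codeword cylinders'' versus $R$ is the wrong decomposition.

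Step 3 has a related problem. The weights of $\xi_0$ already sum to $1-\lambda(R)$, so putting weight $1$ on $\rho$ overshoots. The correct weight would be $\lambda(R)$, which is only right-c.e. Also, ``its existing weight in $\xi_0$'s enumeration'' presupposes that $\rho$ is enumerable.

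\textbf{A coarser split avoids $R$ entirely} and lets the first half of the proof of Theorem \ref{thm:enums-of-ums=ums} apply directly.
\begin{itemize}
\item Fix one index $k$ with $V_k$ a UTM; it exists because $\{V_i\}$ lists all TMs. Put $L=\ell(p(k))$.
\item Cover $\mathbb{B}^\infty\setminus\Gamma_{p(k)}$ by the finitely many cylinders $\Gamma_t$ with $\ell(t)=L$ and $t\neq p(k)$. Write $U_t$ for the machine $s\mapsto U(ts)$.
\item Then $P_U=2^{-L}P_{V_k}+\sum_t 2^{-L}P_{U_t}$. The weights are rational and sum to $1$, every component is enumerable, and $P_{V_k}$ is a universal mixture.
\item Expand $2^{-L}P_{V_k}$ in its own enumeration, and add $2^{-L}$ to the weight of each $P_{U_t}$ wherever it appears there. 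This is exactly the first half of the proof of Theorem \ref{thm:enums-of-ums=ums}, or Lemma 1 of \citet{Leike2015}.
\end{itemize}
This exhibits $P_U$ as a universal mixture without ever having to measure the $\Pi_1$ set $R$.
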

This definition admits some optimal machines that \citeauthor{Wood2013} do not consider UTMs, but none with new Solomonoff priors; e.g. the optimal machine $U_{\delta'}$ from Lemma \ref{lm:optimal-neq-utm} is still unable to emulate a silent TM and so does not meet this definition.

\subsection{Optimality properties}
\label{sec:optimality}
Recall that a Solomonoff inductor on UTM $U$ will make no more error on $x$ than $K_U(x)$, i.e. the Kolmogorov complexity of $x$ on $U$.
Since HSI is equivalent to SolInd, it inherits this bound on individual sequences, but we can also generalise this bound to prediction of, and extrapolation from, entire datasets.

Consider an ordered dataset $X=(x_1, x_2,\dots x_{|X|})$ sampled from some semimeasure $\mu$ computed by $U_\mu$.
The probability assigned to $X$ by some predictor $P$ is
\[
    P(X) \coloneq \prod_{i=1}^{|X|} P(x_i \mid x_{<i})
\]
For HSI, this is equal to
\[
    P_\Eta(X) \coloneq \sum_{j\in\mathbb{N}} \Eta(U_j) \prod_{i=1}^{|X|} P_{U_j}(x_i \mid x_{<i})
\]
But since each individual SolInd prior is independent due to operating only on individual strings, $P_{U_j}(x_i \mid x_{<i})=P_{U_j}(x_i)$, giving
\begin{align*}
    P_\Eta(X) &= \sum_{j\in\mathbb{N}} \Eta(U_j) \prod_{i=1}^{|X|} P_{U_j}(x_i)\\
    P_\Eta(X) &= \sum_{j\in\mathbb{N}} \Eta(U_j) P_{U_j}(X)
\end{align*}

Similarly, we can generalise Equation \ref{eq:eta-sequence} to operate over datasets to obtain the conditioned hyperprior $\Eta(\cdot \mid X)$:
\begin{equation}
\label{eq:eta-dataset}
    \Eta(U_i \mid X) = \Eta(U_i) \prod_{x \in X} P_{U_i}(x)
\end{equation}
Note that this also gives $P_\Eta(X) = \sum_{i\in\mathbb{N}}\Eta(U_i \mid X)$.
Further note that every conditioned HSI prior is equivalent to $P_U$ for some UTM $U$ --- this means that when a hyperprior is conditioned, it is equivalent to a Solomonoff inductor updating its UTM so that it can better predict the given dataset.

Extrapolating from a dataset is therefore equivalent to induction using the conditioned hyperprior (analogous to inference using a trained model):
\begin{equation}
\label{eq:hsi-dataset}
    P_\Eta(x\mid X) \coloneq \sum_{j\in\mathbb{N}} \Eta(U_j \mid X) P_{U_j}(x)
\end{equation}

We can place a bound on HSI's excess error on $X$ relative to $U_\mu$, given by its complexity in the hyperprior.\footnote{Note that $U_\mu$ need not be universal and therefore may not be in $\Eta$'s domain; in this case we may choose any encoding $p$ of $U_\mu$ on any UTM $U$ and use $\Eta(U_\mu)=2^{-\ell(p)}\Eta(U)$.} We have:
\begin{align*}
P_\Eta(X)
    &= \sum_j \Eta(U_j) P_{U_j}(X)\\
    &> \Eta(U_\mu) P_{U_\mu}(X)\\
\frac{P_\Eta(X)}{P_{U_\mu}(X)}
    &> \Eta(U_\mu)
\end{align*}

Taking log loss on both sides, we obtain
\begin{equation}
\label{eq:hsi-error-bound}
-\log_2\left(\frac{P_\Eta(X)}{P_{U_\mu}(X)}\right) < -\log_2(\Eta(U_\mu))
\end{equation}
This bound is similar to SolInd's error bound on individual sequences given above, with both being guaranteed by the positive probability given to the generator of the sequence or dataset, and made strict by there existing infinitely many such generators on any UTM.

Since the above error bound applies regardless of $|X|$, HSI's average error on sequences in $X$ will approach 0 as it grows:
\begin{equation}
\label{eq:hsi-average-error-bound}
\frac{-\log_2\left( \frac{P_\Eta(X)}{P_{U_\mu}(X)} \right)}{|X|}
    < \frac{-\log_2(\Eta(U_\mu))}{|X|} \xrightarrow{|X|\to\infty} 0
\end{equation}

Further, as new sequences are sampled from $\mu$, we should expect $P_\Eta$'s excess error on any newly sampled $x_\nu$ to vanish:
\begin{equation}
\label{eq:hsi-convergence}
    E_{\mu} \left(-\log_2 \frac{P_{\Eta(\cdot\mid X)}(x_\nu)}
    {P_{U_\mu}(x_\nu)}\right)  \xrightarrow{|X|\to\infty} 0
\end{equation}

HSI's ability to converge to any computable distribution of sequences leads us to make the following claim:
\textbf{HSI is the ideal unbounded solution to machine learning over sets of sequences.}
While SolInd is the ideal prediction scheme over individual sequences, it cannot change its distribution nor extrapolate from a dataset in a manner guaranteed to be order-independent.
HSI is dependent on the choice of hyperprior in the same way that SolInd depends on the choice of UTM, but it can condition its hyperprior towards any distribution, remaining as powerful as SolInd at every point.

\section{Applications}

We propose two ways in which HSI may be of practical use:
\begin{enumerate}
\item \textbf{Conditioned hyperpriors as trained models.}
    As an unbounded model of sequence prediction by extrapolation from a dataset, HSI serves as an upper bound for what ML systems in this domain (including LLMs) can achieve, with conditioned hyperpriors describing ideal probability distributions for trained models.
    These systems may reasonably be expected to more closely resemble approximations of HSI as they improve. Past work has  investigated this hypothesis by comparing LLMs and other systems to SolInd~\citep{Grau-Moya2024, Young2025, Wan2024}, but as discussed above, SolInd can only operate over individual strings and so cannot model ML training.

\item \textbf{A formal model of document boundaries in training.}
    Most LLM training can be divided into two kinds: training on separate documents from a dataset, and training on concatenations of documents with a delimiter token indicating document boundaries.
    These two training methods are analogous to the conditioning of an HSI prior on a dataset and giving SolInd the same dataset as a concatenated prefix, respectively.
    \citet{Zhao2024} find that intra-document masking produces superior results to inter-document concatenation, in line with our claim that HSI's hyperprior conditioning is a superior approach to the use of concatenated datasets as prefixes.

\end{enumerate}

\subsection{Limitations}
Like SolInd, HSI is incomputable, requiring an unbounded number of programs to be executed for an unbounded amount of time and use an unbounded amount of memory.
However, since LLMs exist as remarkably capable sequence prediction algorithms with structural similarities to SolInd~\citep{Young2025}, we do not count this as an indication that either SolInd or HSI are not useful models.
Rather, our main concern with HSI (and equally with SolInd) is that it is \textit{monolithic}, considering hypotheses and machines separately and only by their outputs.
By contrast, bounded approximations of HSI might:
\begin{itemize}
    \item Recognise machines in $\Eta$ and programs on machines that make identical predictions and consolidate these
    \item Recognise programs that make similar predictions, and consolidate their similarities while separately modelling their differences
    \item Break computations into modular parts that may be reused and composed
    \item Compare its best-performing programs --- similarities may reveal learned knowledge about the structure of the data while differences may reflect uncertainties
    \item Learn to construct new programs or machines rather than including them all by default
\end{itemize}
Just as HSI does not improve SolInd's performance but instead creates new affordances like hyperprior conditioning and dataset extrapolation, new unbounded algorithms that model these or other optimisations may represent new ways of mapping the space of computable semimeasures and more closely resemble how state-of-the-art sequence prediction algorithms work in practice.

\section{Related Work}

\subsection{Unbounded Sequence Prediction}
\citet{Solomonoff1964} provided the first description of what he called `algorithmic probability', nowadays called Solomonoff Induction. He speculated that SolInd was ``about as good as [most prediction methods]''; \citet{Hutter2001} describes optimality results later found by Levin and Solomonoff and generalises these to arbitrary loss functions and alphabets.
A comprehensive treatment of algorithmic probability can be found in Chapter 4 of \citet{LiVitanyi2019}, while an in-depth philosophical perspective was given by \citet{Rathmanner2011}.

\subsection{UTM-Dependence}
Existing literature describes Solomonoff Induction as being highly dependent on the choice of UTM.
\citet{Wood2013} describe how some universally dominant semimeasures (equivalent to Solomonoff semimeasures on some TMs) are not equivalent to any universal mixtures of semimeasures (and therefore to any SolInd or HSI), while \citet{Leike2015} give examples of how any policy in any environment can be optimal according to SolInd on some UTM.
Their Lemma 1, which proves that the linear combination of an enumerable prior with a universal mixture results in another universal mixture, is closely related to our Theorem \ref{thm:enums-of-ums=ums}.
SolInd leading to arbitrarily-poor decisions does not contradict its optimality properties; SolInd's error bounds are finite but arbitrarily large on any particular sequence.
\citeauthor{Wood2013}'s result shows that Solomonoff semimeasures can be universal but not equivalent to SolInd, while Corollary \ref{cor:optimal-solind>hsi} shows that this is not true of HSI, which is (by Corollary \ref{cor:um=umum=solind=hsi}) an equivalent class to SolInd as long as its hyperprior includes all UTMs --- inclusion of any other subset of TMs does not change the class of priors considered over sequences or datasets.

This also gives a partial answer to the problem of the ``identification of `natural' Turing machines'' given by \citet{Hutter2009} --- if there exist subclasses of UTMs that have some properties desirable for induction, these classes must not assign arbitrary weights to UTMs or universal mixtures.
On the other hand, Corollary \ref{cor:utm-codes} shows that the space of UTMs is wider than defined by \citeauthor{Wood2013}.

\subsection{Meta-learning and Hierarchical Bayes}

\citet{Grau-Moya2024} train Transformers to perform meta-learning by training on data sampled from a UTM, with the aim of comparing them to the ideal of SolInd.
\citet{Baxter2000} frames meta-learning as induction over a prior on hypotheses --- that is, as hierarchical Bayesianism, with priors over observations and a hyperprior over these priors.
Under this lens, the ideal way for an ML system to meta-learn data from a UTM is to maintain a hyperprior over possible UTM distributions; this is an exact description of HSI.

\citet{Sterkenburg2018} examines the suitability of SolInd for prediction in the real world, finding that its incomputability is a fundamental obstacle to its practical use and that no computable approximation can have any of its optimality properties.
While the SolInd prior indeed cannot be learned in practice, HSI describes the ideal meta-learning algorithm for learning universal priors, and so may suggest ways in which approximations might be learned --- however, HSI's meta-learning is as incomputable as SolInd's learning, and so may only offer suggestions that are as impractical as SolInd's.

\subsection{Computable Probability Measures}

\citet{deFinetti1931} describes how any infinite series of exchangeable observations must be conditionally independent in relation to some latent variable, and that the distribution defined by this variable must be expressible as a mixture over possible distributions --- in other words, a hierarchical Bayesian structure inevitably emerges when modelling any exchangeable data.
\citet{HewittSavage1955} describe how de Finetti's theorem applies to Cartesian products such as the Cantor space over which sequence prediction is performed, while \citet{FreerRoy2009, FreerRoy2012} prove that de Finetti mixtures over computable sequences of real numbers are themselves computable.
The generalisation of computable de Finetti measures to sequence prediction is an open research area with a close theoretical relation to this work --- we expect that de Finetti mixtures of enumerable semimeasures over sequences can be proven to also be enumerable semimeasures. If these mixtures are universal, they are (by Theorem \ref{thm:enums-of-ums=ums}) exactly equivalent to HSI, which would make HSI the unique ideal solution to extrapolation from exchangeable sets of sequences.

\subsection{LLMs as Approximations of SolInd}
Recent work has examined the hypothesis that LLMs can be understood as approximations of SolInd, and that they form better approximations than other architectures.
\citet{Deletang2023} argue that language modelling is equivalent to compression, i.e. finding the shortest description of the training dataset according to some encoding. SolInd and HSI over a dataset use weighted averages of all possible descriptions, but shorter inputs get exponentially more weight and therefore dominate.
\citet{Grau-Moya2024} trained Transformer models alongside other neural networks on data sampled from a UTM, finding that Transformers outperform all other models on this data, providing useful evidence that their resemblance to SolInd extends beyond language modelling to sequence modelling in general.
Transformers' performance suffered on out-of-distribution sequence lengths, but this can be attributed to the positional encoding used~\citep{genewein2026algorithmic}.
\citet{Wan2024} examine LLMs as approximations of SolInd, finding that when Transformers are fine-tuned on training examples on which they performed poorly, they improve more than when fine-tuned on examples on which they performed well, since these examples provided the largest Bayesian update to the model's priors.
HSI allows us to quantify this update exactly: updates to each $\Eta(U_i)$ according to an example $x$ are directly proportional to $P_{U_i}(x)$. Since $0<P_{U_i}(x)<1$, this is always a decrease, so each $\Eta(U_i)$ receives a greater update when $P_{U_i}(x)$ is lesser, and likewise with $\Eta$ receiving larger updates overall with smaller $P_\Eta(x)$.

\section{Conclusion}
We have presented \textit{Hierarchical Solomonoff Induction}, or HSI, an ideal unbounded sequence prediction algorithm with bounded error and convergence guarantees over both datasets and sequences.
HSI's hyperprior $\Eta$ may be defined over any enumeration of Turing machines as long as all universal Turing machines are included, with the class of HSI priors $\mathcal{U}_\Eta$ being identical to the class of SolInd priors $\mathcal{U}_M$ for any such enumeration.
HSI is identical to SolInd over individual sequences, and is additionally capable of describing ideal sequence prediction by extrapolation from a dataset in a way that SolInd is not.
We hope that HSI will be useful for understanding how LLMs and other sequence prediction models work by providing an upper bound on what they may achieve and an idealised method that they may approach.

\section*{Acknowledgments}
An AI assistant helped plan, review, and edit this paper, as well as contributing an initial proof that a Turing machine could emulate any mixture of universal Turing machines. The concepts behind this proof were generalised to produce Theorem \ref{thm:enums-of-ums=ums}.
No prose or results in this paper are AI-generated.

I would like to thank my PhD supervisors, Michael Witbrock and Robert Amor.
This research was supported by the University of Auckland Doctoral Scholarship.

\bibliography{references}

\end{document}